\def \x {\mathbf{x}}
\def \y {\mathbf{y}}
\def \z {\mathbf{z}}
\def \u {\mathbf{u}}
\def \w {\mathbf{w}}
\def \v {\mathbf{v}}
\def \R {\mathbb{R}}
\def \T {\mathrm{T}}
\def \O  {\mathcal{O}}
\def \P {\mathcal{P}}
\def \S {\mathcal{S}}
\def \V {\mathcal{V}}
\def \X {\mathcal{X}}
\newtheorem{myThm}{Theorem}
\newtheorem{myLemma}{Lemma}
\theoremstyle{definition}
\newtheorem{myAssum}{Assumption}
\newtheorem{myInstance}{Instance}
\newtheorem{myRemark}{Remark}
\let\norm\undefined 
\DeclarePairedDelimiter\norm{\lVert}{\rVert}
\DeclarePairedDelimiter\abs{\lvert}{\rvert}
\newcommand\inner[2]{\langle #1, #2 \rangle}
\newenvironment{proof}{\par\noindent{\textbf{Proof}\ }}{\hfill\BlackBox\\[2mm]}
\DeclareMathOperator*{\argmin}{arg\,min}
\title[Improved Dynamic Regret of Strongly Convex and Smooth Functions]{Improved Analysis for Dynamic Regret of \\Strongly Convex and Smooth Functions}
\author{%
 \Name{Peng Zhao} \Email{zhaop@lamda.nju.edu.cn}\\
 \Name{Lijun Zhang} \Email{zhanglj@lamda.nju.edu.cn}\\
 \addr National Key Laboratory for Novel Software Technology\\
       Nanjing University, Nanjing 210023, China%
}
\begin{document}

\maketitle

\begin{abstract}
In this paper, we present an improved analysis for dynamic regret of strongly convex and smooth functions. Specifically, we investigate the Online Multiple Gradient Descent (OMGD) algorithm proposed by~\citet{NIPS'17:zhang-dynamic-sc-smooth}. The original analysis shows that the dynamic regret of OMGD is at most $\O(\min\{\P_T,\S_T\})$, where $\P_T$ and $\S_T$ are path-length and squared path-length that measures the cumulative movement of minimizers of the online functions. We demonstrate that by an improved analysis, the dynamic regret of OMGD can be improved to $\O(\min\{\P_T,\S_T,\V_T\})$, where $\V_T$ is the function variation of the online functions. Note that the quantities of $\P_T, \S_T, \V_T$ essentially reflect different aspects of environmental non-stationarity---they are not comparable in general and are favored in different scenarios. Therefore, the dynamic regret presented in this paper actually achieves a \emph{best-of-three-worlds} guarantee and is strictly tighter than previous results.
\end{abstract}

\begin{keywords}
  Online Learning, Dynamic Regret, Strong Convexity, Smoothness, Gradient Descent
\end{keywords}

\section{Introduction}

In the development of online convex optimization, there are plenty of works devoted to designing online algorithms for minimizing static regret~\citep{book'16:Hazan-OCO}, defined as
\begin{equation}
	\mbox{S-Regret}_T = \sum_{t=1}^{T} f_t(\x_t) - \min_{\x \in \X} \sum_{t=1}^{T} f_t(\x),
\end{equation}
which is the difference between the cumulative loss of the online algorithm and that of the best strategy in hindsight. When  environment variables are stationary, minimizing static regret will lead to an algorithm that behaves well over the iterations. However, such a claim may not hold when environments are non-stationary and changing with time. To cope with non-stationary environments where the optimal decisions of online functions can be drifting over time, a more stringent measure---\emph{dynamic regret}---is proposed and draws much attentions in recent years~\citep{ICML'03:zinkvich,ICML'13:dynamic-model,OR'15:dynamic-function-VT,AISTATS'15:dynamic-optimistic,CDC'16:dynamic-sc,ICML'16:Yang-smooth,NIPS'17:zhang-dynamic-sc-smooth,ICML'18:zhang-dynamic-adaptive,NIPS'19:Wangyuxiang,AISTATS'20:Zhang,UAI'20:simple}, defined as
\begin{equation}
	\label{eq:dynamic-regret}
      \mbox{D-Regret}_T = \sum_{t=1}^T f_t(\x_t) -  \sum_{t=1}^T f_t(\x^*_t),
\end{equation}
where $\x_t^* \in \argmin_{\x\in \X} f_t(\x)$ is the minimizer of the online function. Dynamic regret enforces the player to compete with a time-varying comparator sequence, and thus is particularly favored in online learning in non-stationary environments. The notion of dynamic regret is also referred to as tracking regret or shifting regret in the setting of prediction with expert advice~\citep{journals/ml/HerbsterW98,JMLR'01:Herbster,JMLR'12:bousquet-dynamic,NIPS'16:Wei-non-stationary-expert,NIPS'19:Zheng}.
	
It is known that in the worst case, a sub-linear dynamic regret is not attainable unless imposing certain regularities of the comparator sequence or the function sequence~\citep{OR'15:dynamic-function-VT,AISTATS'15:dynamic-optimistic}. There are mainly three kinds of  regularities used in the literature~\citep{ICML'03:zinkvich,OR'15:dynamic-function-VT,NIPS'17:zhang-dynamic-sc-smooth}.
\begin{itemize}
	\item Path-length~\citep{ICML'03:zinkvich}: the variation of optimizers
	\begin{equation*}
		\label{eq:path-length}
		\P_T = \sum_{t=2}^{T} \norm{\x_{t-1}^* - \x_{t}^*}_2,
	\end{equation*}
	\item Squared path-length~\citep{NIPS'17:zhang-dynamic-sc-smooth}: the squared variation of optimizers
	\begin{equation*}
		\label{eq:square-path-length}
		\S_T = \sum_{t=2}^{T} \norm{\x_{t-1}^* - \x_{t}^*}_2^2,
	\end{equation*}
	\item Function variation~\citep{OR'15:dynamic-function-VT}: the variation over consecutive function values
	\begin{equation*}
		\label{eq:function-variation}
		\V_T = \sum_{t=2}^{T} \sup_{\x \in \X} \abs{f_{t-1}(\x) - f_t(\x)},
	\end{equation*}
\end{itemize}

When the path-length $\P_T$ is known in advance, dynamic regret of Online Gradient Descent~(OGD) is at most $\O(\sqrt{T(1 + \P_T)})$~\citep{ICML'03:zinkvich,ICML'16:Yang-smooth} for convex functions. For strongly convex and smooth functions,~\citet{CDC'16:dynamic-sc} first show that an $\O(\P_T)$ dynamic regret is achievable; later,~\citet{NIPS'17:zhang-dynamic-sc-smooth} propose Online Multiple Gradient Descent (OMGD) and prove an $\O(\min\{\P_T,\S_T\})$ dynamic regret. \citet{ICML'16:Yang-smooth} disclose that the $\O(\P_T)$ rate is also attainable for convex and smooth functions, provided that all the minimizers $\x_t^*$'s lie in the interior of the feasible set $\X$. Besides,~\citet{OR'15:dynamic-function-VT} show that OGD with a restarting strategy attains an $\O(T^{2/3}\V_T^{1/3})$ dynamic regret when the function variation $\V_T$ is available ahead of time. Later,~\citet{NIPS'19:Wangyuxiang} improve the rate to $\O(T^{1/3}\V_T^{2/3})$ for $1$-dimensional square loss by trend filtering techniques. We finally remark that another strengthened form of dynamic regret is recently studied~\citep{NIPS'18:Zhang-Ader,AISTATS'20:BCO,NIPS'20:sword} which supports to compete with any sequence of changing comparators rather than the optimizers of online functions only.

In this paper, we focus on the dynamic regret measure~\eqref{eq:dynamic-regret} of online convex optimization for \emph{strongly convex} and \emph{smooth} functions. Specifically, we assume that the online functions $f_1,\ldots,f_T$ are $\lambda$-strongly convex and $L$-smooth, namely, for all $t = 1,\ldots,T$,
\begin{itemize}
	\item $\lambda$-strong convexity: for any $\x, \y \in \X$, the following condition holds
	\begin{equation*}
	  f_t(\y) \geq f_t(\x) + \nabla f_t(\x)^\mathrm{T}(\y - \x) + \frac{\lambda}{2}\norm{\y - \x}_2^2.
	\end{equation*}
	\item $L$-smoothness: for any $\x, \y \in \X$, the following condition holds
	\begin{equation*}
	  f_t(\y) \leq f_t(\x) + \nabla f_t(\x)^\mathrm{T}(\y - \x) + \frac{L}{2}\norm{\y - \x}_2^2.
	\end{equation*}
\end{itemize}

To minimize the dynamic regret of strongly convex and smooth functions, \citet{NIPS'17:zhang-dynamic-sc-smooth} propose an algorithm called Online Multiple Gradient Descent (OMGD) and prove that under certain mild assumptions, OMGD enjoys the following dynamic regret
\begin{equation}
	\label{eq:dynamic-regret-OMGD-Zhang}
	\sum_{t=1}^{T} f_t(\x_t) - \sum_{t=1}^{T} f_t(\x_t^*) \leq  \O(\min\{\P_T, \mathcal{S}_T\}).
\end{equation}

In this paper, we present an improved analysis and demonstrate that the dynamic regret of OMGD can be also bounded by the function variation term $\V_T$. As a result, we actually show that the dynamic regret of OMGD is bounded by the minimization of path-length, squared path-length and function variation, namely,
\begin{equation}
	\label{eq:dynamic-regret-OMGD}
	\sum_{t=1}^{T} f_t(\x_t) - \sum_{t=1}^{T} f_t(\x_t^*) \leq  \O(\min\{\P_T, \S_T, \V_T\}).
\end{equation}

As shown in the work of~\citet{AISTATS'15:dynamic-optimistic}, these three kinds of regularities $\P_T, \S_T, \V_T$ are generally incomparable and are favored in different scenarios. Therefore, the dynamic regret bound~\eqref{eq:dynamic-regret-OMGD} presented in this paper actually achieves a \emph{best-of-three-worlds} guarantee, and is strictly tighter than the bound~\eqref{eq:dynamic-regret-OMGD-Zhang} of~\citet{NIPS'17:zhang-dynamic-sc-smooth}.

\section{Algorithm: Online Multiple Gradient Descent}
\label{sec:algorithm}
We first introduce the Online Gradient Descent (OGD) algorithm~\citep{ICML'03:zinkvich}, which starts from any $\x_1 \in \X$ and performs the following update at each iteration:
\begin{equation*}
	\x_{t+1} = \Pi_{\X}[\x_t - \eta \nabla f_t(\x_t)],
\end{equation*}
where $\eta > 0$ is the step size, and $\Pi_{\X}[\cdot]$ denotes  Euclidean projection onto the nearest point in $\X$. 

To minimize the dynamic regret of strongly convex and smooth functions,~\citet{NIPS'17:zhang-dynamic-sc-smooth} propose a variant of OGD, called Online Multiple Gradient Descent (OMGD). The algorithm performs gradient descent multiple times at each iteration. Specifically, at iteration $t$, given the current decision $\x_t$, OMGD will produce a sequence of $\z_t^{1},\ldots,\z_t^{K},\z_t^{K+1}$, where $K$ is the number of inner iterations, a constant \emph{independent} from the time horizon $T$. The inner iterations start from $\z_t^1 = \x_t$ and then perform the following update procedure,
\begin{equation*}
	\z_t^{k+1} = \Pi_{\X}[\z_t^{k} - \eta \nabla f_t(\z_t^{k})],
\end{equation*}
where $k = 1,\ldots,K$ is the index of inner loop. The decision $\x_{t+1}$ is set as the output of the inner iterations $\z_{t}^{K+1}$, i.e., $\x_{t+1} = \z_{t}^{K+1}$. The procedures of OMGD are summarized in Algorithm~\ref{alg:OMGD}.

\begin{algorithm}[t]
\caption{Online Multiple Gradient Descent (OMGD)~\citep{NIPS'17:zhang-dynamic-sc-smooth}}
\begin{algorithmic}[1]
\REQUIRE number of inner iterations $K$ and step size $\eta$
\STATE Let $\x_1$ be any point in $\X$
\FOR{$t=1,\ldots,T$}
\STATE Submit $\x_t$ and receive the loss $f_t:\X \mapsto \R$
\STATE $\z_t^1= \x_t$
\FOR{$k = 1,\ldots,K$}
\STATE \[
\z_t^{k+1} = \Pi_{\X}[\z_t^k - \eta\nabla f_t(\z_t^k)]
\]
\ENDFOR
\STATE $\x_{t+1} = \z_t^{K+1}$
\ENDFOR
\end{algorithmic}
\label{alg:OMGD}
\end{algorithm}

\section{Dynamic Regret Analysis}
In this section, we provide dynamic regret analysis for the OMGD algorithm. We first restate the (squared) path-length bounds of~\citet{NIPS'17:zhang-dynamic-sc-smooth} in Section~\ref{sec:path-length-bound} and then prove the function variation bounds in Section~\ref{sec:function-variation-bound}. Finally, in Section~\ref{sec:comparison} we present comparisons between various regularities and show the advantage of our results.

Before presenting the theoretical analysis, we state the following standard assumption adopted in the work of~\citet{NIPS'17:zhang-dynamic-sc-smooth} and this work. 
\begin{myAssum}
\label{assm:function}
Suppose the following conditions hold for each online function $f_t: \X \mapsto \R$:
\begin{itemize}
\item The online function $f_t$ is $\lambda$-strongly convex and $L$-smooth over $\X$;
\item The gradients are bounded by $G$, i.e., $\norm{\nabla f_t(\x)}_2 \leq G$ for any $\x \in \X$.
\end{itemize}
\end{myAssum}

\subsection{(Squared) Path-length Bounds}
\label{sec:path-length-bound}
\citet{NIPS'17:zhang-dynamic-sc-smooth} prove that the dynamic regret of OMGD can be bounded by the path-length and squared path-length. We restate their results as follows.
\begin{myThm}[{Theorem 2 of~\citet{NIPS'17:zhang-dynamic-sc-smooth}}]
\label{thm:path-length-bound}
Under Assumption~\ref{assm:function}, by setting the step size $\eta \leq 1/L$ and the number of inner iterations $K=\lceil\frac{1 / \eta+\lambda}{2 \lambda} \ln 4\rceil$ in Algorithm~\ref{alg:OMGD}, for any constant $\alpha > 0$, we have
\begin{equation}
\label{eq:thm-P_T}
\begin{split}
&\sum_{t=1}^T  f_t(\x_t) - \sum_{t=1}^{T} f_t(\x_t^*)
\leq \min\left\{
\begin{split}
&  2 G \P_T +  2 G \norm{\x_1 - \x_1^*}_2,\\
&  \frac{1}{2\alpha}\sum_{t=1}^T  \norm{\nabla f_t(\x_{t}^*)}_2^2  + 2 (L+\alpha) \mathcal{S}_T + (L+\alpha)\norm{\x_{1} - \x_{1}^*}_2^2.
\end{split} \right.
\end{split}
\end{equation}

Furthermore, suppose $\sum_{t=1}^{T}\norm{\nabla f_t(\x_t^*)}_2^2 = \O(\S_T)$, we have 
\begin{equation}
	\label{eq:path-length-bound}
	\sum_{t=1}^{T} f_t(\x_t) - \sum_{t=1}^{T} f_t(\x_t^*) \leq \O\big(\min\{\P_T, \S_T\}\big).
\end{equation}
In particular, if $\x_t^*$ belongs to the relative interior of $\X$ (i.e., $\nabla f_t(\x_t^*) = 0$) for all $t \in [T]$, the dynamic regret bound in~\eqref{eq:thm-P_T}, as $\alpha \rightarrow 0$, implies
\[
\begin{split}
&\sum_{t=1}^T  f_{t}(\x_t) - \sum_{t=1}^{T} f_{t}(\x_t^*)
\leq  \min\big\{2 G \P_T +  2 G \norm{\x_1 - \x_1^*}_2, 2 L \S_T + L\norm{\x_{1} - \x_{1}^*}_2^2\big\}.
\end{split}
\]
\end{myThm}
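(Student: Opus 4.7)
The overall plan is to separate the analysis into two parts: a reduction of the per-round excess $f_t(\x_t) - f_t(\x_t^*)$ to a quantity depending only on the distance $\norm{\x_t - \x_t^*}_2$, and a telescoping control of these distances using the contraction guaranteed by running $K$ projected gradient steps on a strongly convex and smooth function. The workhorse is a single contraction lemma: with $\eta \leq 1/L$, one inner step of projected gradient descent on $f_t$ reduces the squared distance to $\x_t^*$ by a multiplicative factor strictly less than one, obtained by expanding $\norm{\z_t^k - \eta\nabla f_t(\z_t^k) - \x_t^*}_2^2$ and invoking strong convexity together with non-expansiveness of $\Pi_{\X}$. The choice $K = \lceil (1/\eta + \lambda)/(2\lambda)\cdot\ln 4\rceil$ is calibrated so that iterating $K$ times drives the overall squared-distance factor below $1/4$, yielding uniformly in $t$ the clean statement $\norm{\x_{t+1} - \x_t^*}_2 \leq \tfrac{1}{2}\norm{\x_t - \x_t^*}_2$.

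For the path-length branch I would use the $G$-Lipschitz consequence of bounded gradients to write $f_t(\x_t) - f_t(\x_t^*) \leq G\norm{\x_t - \x_t^*}_2$. Combining the triangle inequality with the contraction gives $\norm{\x_{t+1} - \x_{t+1}^*}_2 \leq \tfrac{1}{2}\norm{\x_t - \x_t^*}_2 + \norm{\x_t^* - \x_{t+1}^*}_2$, which unrolls via geometric-series bookkeeping into $\sum_{t=1}^T \norm{\x_t - \x_t^*}_2 \leq 2\norm{\x_1 - \x_1^*}_2 + 2\P_T$. Multiplying by $G$ recovers the first branch of~(\ref{eq:thm-P_T}).

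For the squared-path-length branch I would instead expand $L$-smoothness at the minimizer,
\[
f_t(\x_t) - f_t(\x_t^*) \leq \nabla f_t(\x_t^*)^{\T}(\x_t - \x_t^*) + \tfrac{L}{2}\norm{\x_t - \x_t^*}_2^2,
\]
and split the linear term with Young's inequality at rate $\alpha$ to obtain the upper bound $\tfrac{1}{2\alpha}\norm{\nabla f_t(\x_t^*)}_2^2 + \tfrac{L+\alpha}{2}\norm{\x_t - \x_t^*}_2^2$. Applying $(a+b)^2 \leq 2a^2 + 2b^2$ to the squared form of the contraction then yields $\norm{\x_{t+1} - \x_{t+1}^*}_2^2 \leq \tfrac{1}{2}\norm{\x_t - \x_t^*}_2^2 + 2\norm{\x_t^* - \x_{t+1}^*}_2^2$, hence $\sum_t \norm{\x_t - \x_t^*}_2^2 \leq 2\norm{\x_1 - \x_1^*}_2^2 + 4\S_T$. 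Assembling the two estimates gives the second branch of~(\ref{eq:thm-P_T}); the corollary~(\ref{eq:path-length-bound}) is then a direct substitution under the hypothesis $\sum_t \norm{\nabla f_t(\x_t^*)}_2^2 = \O(\S_T)$, and the interior-minimizer special case follows by sending $\alpha \to 0$ after the gradient term vanishes.

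The main obstacle is the contraction lemma: checking that the per-step contraction really survives the projection without any interior assumption on $\x_t^*$, and confirming that the specific $K$ prescribed is exactly what the logarithm of the per-step contraction factor demands. Everything else is careful bookkeeping with the two telescoping inequalities above. A minor subtlety is keeping constants correct in the Young split so that the $1/(2\alpha)$ and $L+\alpha$ coefficients match~(\ref{eq:thm-P_T}) precisely; both branches then combine to produce the promised $\min\{\cdot,\cdot\}$ guarantee.
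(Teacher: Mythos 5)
Your proposal is correct and follows essentially the same route as the original argument of \citet{NIPS'17:zhang-dynamic-sc-smooth}, which this paper only restates without reproving: a per-step contraction of the (squared) distance to $\x_t^*$ under projected gradient descent (using the fixed-point property $\x_t^* = \Pi_{\X}[\x_t^* - \eta\nabla f_t(\x_t^*)]$ together with non-expansiveness, so no interior assumption is needed), calibrated so that $K$ inner steps give $\norm{\x_{t+1}-\x_t^*}_2 \leq \tfrac{1}{2}\norm{\x_t-\x_t^*}_2$, followed by the two telescoping recursions for $\sum_t\norm{\x_t-\x_t^*}_2$ and $\sum_t\norm{\x_t-\x_t^*}_2^2$. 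Your constants in both branches check out exactly against~\eqref{eq:thm-P_T}.
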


\subsection{Function Variation Bounds}
\label{sec:function-variation-bound}
In this part, we show that by an improved analysis, the dynamic regret of OMGD can be further bounded by the function variation $\V_T$, as demonstrated in the following theorem.
\begin{myThm}
\label{thm:VT-bound}
Under Assumption~\ref{assm:function}, by setting the step size $\eta = 1/L$, and the number of inner iterations $K = \lceil\frac{4(L + \lambda)}{\lambda}\ln 4\rceil$ in Algorithm~\ref{alg:OMGD}, we have
\begin{equation}
	\label{eq:dynamic-regret-V_t}
	\sum_{t=1}^{T} f_t(\x_t) - \sum_{t=1}^{T} f_t(\x_t^*) \leq 2 \V_T + 2\big( f_{1}(\x_1)-f_T(\x_{T+1})\big).
\end{equation}

Furthermore, suppose $\sum_{t=1}^{T}\norm{\nabla f_t(\x_t^*)}_2^2 = \O(\S_T)$, from Theorem~\ref{thm:path-length-bound} and~\eqref{eq:dynamic-regret-V_t}, we have 
\begin{equation*}
	\sum_{t=1}^{T} f_t(\x_t) - \sum_{t=1}^{T} f_t(\x_t^*) \leq \O\big(\min\{\P_T, \S_T, \V_T\}\big).
\end{equation*}
In particular, if $\x_t^*$ belongs to the relative interior of $\X$ (i.e., $\nabla f_t(\x_t^*) = 0$) for all $t \in [T]$, the dynamic regret bounds in~\eqref{eq:thm-P_T} and~\eqref{eq:dynamic-regret-V_t}, as $\alpha \rightarrow 0$, imply
\[
\begin{split}
\sum_{t=1}^T  f_{t}(\x_t) - \sum_{t=1}^{T} f_{t}(\x_t^*)
\leq  \min\big\{2 G \P_T +  2 G \norm{\x_1 - \x_1^*}_2, {}& 2 L \S_T + L\norm{\x_{1} - \x_{1}^*}_2^2, \\
{}& 2 \V_T + 2\big( f_{1}(\x_1)-f_T(\x_{T+1})\big)\big\}.
\end{split}
\]
\end{myThm}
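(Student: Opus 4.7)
The plan is to reduce the instantaneous regret $f_t(\x_t) - f_t(\x_t^*)$ to a ``function-value drop'' $f_t(\x_t) - f_t(\x_{t+1})$ via a per-round halving of the suboptimality, then telescope across $t$ so that only the function variation survives. Concretely, with the stated $\eta = 1/L$ and $K$, I first aim to establish the contraction
$f_t(\x_{t+1}) - f_t(\x_t^*) \leq \tfrac{1}{2}\bigl(f_t(\x_t) - f_t(\x_t^*)\bigr)$
for every $t$.

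Since the inner loop is simply $K$ projected gradient steps on the \emph{fixed} $\lambda$-strongly convex and $L$-smooth function $f_t$, standard convergence theory delivers this halving. The cleanest route is to combine the descent lemma at step size $\eta = 1/L$ with the Polyak--{\L}ojasiewicz inequality implied by strong convexity, yielding the per-inner-step contraction $f_t(\z_t^{k+1}) - f_t(\x_t^*) \leq (1-\lambda/L)\bigl(f_t(\z_t^k)-f_t(\x_t^*)\bigr)$ and hence $f_t(\x_{t+1}) - f_t(\x_t^*) \leq (1-\lambda/L)^K\bigl(f_t(\x_t)-f_t(\x_t^*)\bigr)$. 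The stated $K = \lceil \frac{4(L+\lambda)}{\lambda}\ln 4\rceil$ is more than enough to push this factor below $1/2$; its generous size is consistent with an alternative distance-based contraction analysis, where one first shows $\|\z_t^{k+1}-\x_t^*\|^2 \leq \rho\,\|\z_t^k-\x_t^*\|^2$ with $\rho = 1-2\lambda/(L+\lambda)$ and then converts back to function values via smoothness and strong convexity, paying a condition-number factor.

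Given the halving, I would split $f_t(\x_t) - f_t(\x_t^*) = \bigl(f_t(\x_t) - f_t(\x_{t+1})\bigr) + \bigl(f_t(\x_{t+1}) - f_t(\x_t^*)\bigr)$, absorb the second bracket into the left-hand side, and rearrange to obtain $f_t(\x_t) - f_t(\x_t^*) \leq 2\bigl(f_t(\x_t) - f_t(\x_{t+1})\bigr)$. Summing over $t$ and shifting indices gives $\sum_{t=1}^T\bigl(f_t(\x_t)-f_t(\x_{t+1})\bigr) = \bigl(f_1(\x_1)-f_T(\x_{T+1})\bigr) + \sum_{t=2}^T \bigl(f_t(\x_t) - f_{t-1}(\x_t)\bigr)$, and each summand in the last sum is bounded by $\sup_{\x\in\X}|f_t(\x)-f_{t-1}(\x)|$, whose total is $\V_T$. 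Multiplying by $2$ yields~\eqref{eq:dynamic-regret-V_t}; combining with Theorem~\ref{thm:path-length-bound} then produces the stated $\min\{\P_T,\S_T,\V_T\}$ guarantee, and the interior-minimizer corollary follows by sending $\alpha\to 0$ exactly as in Theorem~\ref{thm:path-length-bound}.

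The main technical hurdle is the halving step in the presence of the projection $\Pi_{\X}$: the plain (unconstrained) descent lemma needs slight massaging to handle constrained updates. A self-contained proof can either run the PL argument for projected gradient directly, invoking first-order optimality at $\x_t^*$ and non-expansiveness of $\Pi_{\X}$, or work throughout in squared distance via $\|\z_t^{k+1}-\x_t^*\|^2 \leq \|\z_t^k-\x_t^*\|^2 - 2\eta\langle\nabla f_t(\z_t^k),\,\z_t^k-\x_t^*\rangle + \eta^2\|\nabla f_t(\z_t^k)\|^2$ and translate back to function values at the end. Everything after the halving is essentially a one-line split followed by a one-line telescope, so the entire difficulty of strengthening Theorem~\ref{thm:path-length-bound} with the $\V_T$ branch is concentrated in that single contraction estimate.
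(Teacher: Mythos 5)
Your proposal is correct and follows essentially the same route as the paper: a per-inner-step geometric decay of the suboptimality $f_t(\z_t^k)-f_t(\x_t^*)$, iterated $K$ times to obtain a constant-factor contraction, followed by a telescoping argument in which the cross terms $f_t(\x_t)-f_{t-1}(\x_t)$ are absorbed into $\V_T$; your per-round rearrangement $f_t(\x_t)-f_t(\x_t^*)\le 2\big(f_t(\x_t)-f_t(\x_{t+1})\big)$ is just a localized version of the paper's global rearrangement in~\eqref{eq:variation}. The one soft spot is the contraction under the projection $\Pi_{\X}$, which you correctly flag but leave unproved: the paper supplies exactly this step as Lemma~\ref{lemma:func-contract} (Nesterov's argument minimizing over the segment between $\u$ and $\x^*$, giving $\gamma=\max\{1/2,\,1-\lambda/(4(L-\lambda))\}$ per step, calibrated so that $\gamma^K\le 1/4$ with the stated $K$). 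Note also that your fallback distance-based route would not deliver the halving with this \emph{fixed} $K$ when $L/\lambda$ is large, since converting $\norm{\z_t^{K+1}-\x_t^*}_2^2$ back to function values costs an extra factor of $L/\lambda$ that a condition-number-independent exponent cannot absorb.
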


\begin{myRemark}
Notice that above settings of step size $\eta = 1/L$ and the number of inner iterations $K = \lceil\frac{4(L + \lambda)}{\lambda}\ln 4\rceil$ also satisfy the condition of Theorem~\ref{thm:path-length-bound} (namely, $\eta \leq 1/L$ and $K \geq \lceil\frac{L+\lambda}{2 \lambda} \ln 4\rceil$ hold simultaneously), so the dynamic regret is also bounded by the path-length bounds in~\eqref{eq:path-length-bound}.
\end{myRemark}

To prove Theorem~\ref{thm:VT-bound}, we introduce the following key lemma due to~\citet{nesterov2013gradient}.

\begin{myLemma}
\label{lemma:func-contract}
Assume that the function $f: \X \mapsto \R$ is $\lambda$-strongly convex and $L$-smooth, and denote by $\x^*$ the optimizer, i.e., $\x^* = \argmin_{\x \in \X} f(\x)$. Let 
\begin{equation}
	\label{eq:update-procedure}
	\v = \Pi_{\X}\left[\u - \frac{1}{L}\nabla f(\u)\right].
\end{equation}
Then, we have
\begin{equation}
	\label{eq:func-contract}
	f(\v) - f(\x^*) \leq \gamma \big(f(\u) - f(\x^*)\big),
\end{equation}
where 
\begin{equation*}
	\gamma = 
	\begin{cases}
	\frac{1}{2}, &\text{if }~3 \lambda \geq 2 L \vspace{2mm}\\
	1-\frac{\lambda}{4(L-\lambda)}, &\text{otherwise.}
\end{cases}
\end{equation*}
\end{myLemma}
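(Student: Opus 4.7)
My plan is to derive a generalized one-step inequality valid for any reference point $\y \in \X$, then specialize to $\y$ on the segment $[\u, \x^*]$ and optimize the choice of $\y$; the two cases of the lemma correspond to whether the minimizer of the resulting one-dimensional problem is interior to $[0,1]$ or at the boundary $\y = \x^*$.

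First, for an arbitrary $\y \in \X$ I would combine three ingredients: smoothness, $f(\v) \leq f(\u) + \langle \nabla f(\u), \v - \u\rangle + (L/2)\|\v - \u\|_2^2$; strong convexity rewritten as $f(\u) \leq f(\y) + \langle \nabla f(\u), \u - \y\rangle - (\lambda/2)\|\u - \y\|_2^2$; and the variational characterization of $\v = \Pi_{\X}[\u - \nabla f(\u)/L]$, which for the test point $\y$ gives $\langle \nabla f(\u), \v - \y\rangle \leq L\langle \u - \v, \v - \y\rangle$. Summing smoothness and strong convexity, substituting the projection bound, and applying the polarization identity $L\langle \u - \v, \v - \y\rangle + (L/2)\|\u - \v\|_2^2 = (L/2)(\|\u - \y\|_2^2 - \|\v - \y\|_2^2)$, I obtain the key one-step inequality
\begin{equation*}
f(\v) - f(\y) \;\leq\; \tfrac{L-\lambda}{2}\|\u - \y\|_2^2 \;-\; \tfrac{L}{2}\|\v - \y\|_2^2, \qquad \text{for every } \y \in \X.
\end{equation*}

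Next, I specialize to $\y = (1-t)\u + t\x^*$ with $t \in [0,1]$, which lies in $\X$ by convexity. Then $\|\u - \y\|_2 = t\|\u - \x^*\|_2$, and convexity of $f$ yields $f(\y) - f(\x^*) \leq (1-t)(f(\u) - f(\x^*))$. Discarding the non-positive term $-(L/2)\|\v - \y\|_2^2$ in the key inequality above and invoking the strong-convexity bound $\|\u - \x^*\|_2^2 \leq (2/\lambda)(f(\u) - f(\x^*))$ leads to
\begin{equation*}
f(\v) - f(\x^*) \;\leq\; \Big[(1-t) + \tfrac{t^2 (L-\lambda)}{\lambda}\Big]\,(f(\u) - f(\x^*)).
\end{equation*}

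Finally, I minimize $h(t) := (1-t) + t^2(L-\lambda)/\lambda$ over $t \in [0,1]$. The unconstrained minimizer is $t^\star = \lambda/(2(L-\lambda))$, with $h(t^\star) = 1 - \lambda/(4(L-\lambda))$. The feasibility condition $t^\star \leq 1$ is exactly equivalent to $3\lambda \leq 2L$: in this regime (the second case of the lemma) the interior optimum is attained and yields $\gamma = 1 - \lambda/(4(L-\lambda))$ precisely. In the complementary regime $3\lambda \geq 2L$ (the first case), one has $t^\star \geq 1$, so the minimum over $[0,1]$ is attained at $t=1$, giving $h(1) = (L-\lambda)/\lambda \leq 1/2$ and hence $\gamma = 1/2$. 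I expect the main obstacle to be recognizing that one should parameterize $\y$ on the entire segment $[\u,\x^*]$ rather than setting $\y = \x^*$ directly: the latter reproduces only the first-case bound, while tuning $t$ balances the linear saving $(1-t)$ from convexity against the quadratic cost $t^2(L-\lambda)/\lambda$ from the $\|\u-\y\|_2^2$ term in the key inequality, which is precisely what produces the sharper second-case rate.
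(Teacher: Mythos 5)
Your proof is correct and follows essentially the same route as the paper's: both arguments reduce to minimizing $(1-\alpha)+\alpha^2(L-\lambda)/\lambda$ over reference points on the segment $[\u,\x^*]$, with the case split determined by whether the unconstrained minimizer $\lambda/(2(L-\lambda))$ lies in $[0,1]$. The only difference is in how the intermediate bound is obtained: the paper gets $f(\v)\le\min_{\x\in\X}\{f(\x)+\tfrac{L-\lambda}{2}\norm{\x-\u}_2^2\}$ by viewing the projected step as the argmin of the smoothness quadratic model, whereas you derive a slightly stronger three-point version via the projection variational inequality and a polarization identity, then discard the extra $-\tfrac{L}{2}\norm{\v-\y}_2^2$ term.
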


\begin{proof}[{of Lemma~\ref{lemma:func-contract}}]
Notice that the update procedure in~\eqref{eq:update-procedure} is equivalent to 
\begin{equation}
	\label{eq:update-procedure-2}
	\v = \argmin_{\x \in \X}\left\{ f(\u) + \inner{\nabla f(\u)}{\x - \u} + \frac{L}{2}\norm{\x - \u}_2^{2}\right\}.
\end{equation}

Then, we have
\begin{align*}
f(\v) \leq {} & f(\u)+\inner{\nabla f(\u)}{\v - \u} + \frac{L}{2}\norm{\v - \u}_2^2 \tag*{(by $L$-smoothness)}\\
=&\min _{\x \in \X}\left\{f(\u)+\left\langle\nabla f(\u),  \x-\u\right\rangle+\frac{L}{2}\norm{\x - \u}_2^2\right\}\tag*{(due to update procedure in~\eqref{eq:update-procedure-2})}\\
\leq {} & \min _{\x \in \X}\left\{f(\x)-\frac{\lambda}{2}\norm{\x - \u}_2^2+\frac{L}{2}\norm{\x - \u}_2^2\right) \tag*{(by $\lambda$-strong convexity)}\\
\leq {} & \min _{\x=\alpha \x^*+(1-\alpha) \u,  \alpha \in[0, 1]}\left\{f(\x)+\frac{ L - \lambda }{2}\norm{\x - \u}_2^2\right\}\\
= {} &\min _{\alpha \in[0, 1]}\left\{f\left(\alpha \x^*+(1-\alpha) \u\right)+\frac{L-\lambda}{2}\norm{\alpha \x^*+(1-\alpha) \u-\u}_2^2\right\}\\
\leq {} & \min _{\alpha \in[0, 1]}\left\{\alpha f(\x^*)+(1-\alpha) f(\u)+\frac{L-\lambda}{2} \alpha^2\norm{\x^*-\u}_2^2\right\}\\
= {} &\min _{\alpha \in[0, 1]}\left\{f(\u)-\alpha\left(f(\u)-f(\x^*)\right)+\frac{L-\lambda}{2} \alpha^2\norm{\x^*-\u}_2^2\right\}\\
\leq {} & \min _{\alpha \in[0, 1]}\left\{f(\u)-\alpha\left(f(\u)-f(\x^*)\right)+\frac{L-\lambda}{2} \frac{2}{\lambda} \alpha^2\left(f(\u)-f(\x^*)\right)\right\}\\
= {} & \min _{\alpha \in[0, 1]}\left\{f(\u)+\left(\frac{L-\lambda}{\lambda} \alpha^2-\alpha\right)\big(f(\u)-f(\x^*)\big)\right\}.
\end{align*}
The last inequality is true because $f(\x) - f(\x^*) \geq \frac{\lambda}{2} \norm{\x - \x^*}_2^2$ holds for all $\x \in \X$ due to the strong convexity~\citep[Theorem 2.1.8]{book'18:Nesterov-OPT}. Therefore, if $\frac{\lambda}{2(L - \lambda)} \geq 1$, then we set $\alpha = 1$ and obtain 
\[
	f(\v)-f(\x^*) \leq \frac{L-\lambda}{\lambda}\left(f(\u)-f(\x^*)\right) \leq \frac{1}{2}\left(f(\u)-f(\x^*)\right).
\]
Otherwise, we set $\alpha = \frac{\lambda}{2(L - \lambda)}$, and obtain
\[
	f(\v)-f(\x^*) \leq\left(1-\frac{\lambda}{4(L-\lambda)}\right)\left(f(\u)-f(\x^*)\right).
\]
This ends the proof of Lemma~\ref{lemma:func-contract}.
\end{proof}

Based on Lemma~\ref{lemma:func-contract}, we now present the proof of Theorem~\ref{thm:VT-bound}.
~\\

\begin{proof}[of Theorem~\ref{thm:VT-bound}]
From Lemma~\ref{lemma:func-contract}, we know that 
\begin{equation}
	\label{eq:func-contract-apply}
	f_t(\x_{t+1}) - f_t(\x_t^*) = f_t(\z_t^{K+1}) - f_t(\x_t^*) \overset{\eqref{eq:func-contract}}{\leq} \gamma^K \left(f_t(\x_t)-f_t(\x_t^*)\right) \leq \frac{1}{4}\left(f_t(\x_t)-f_t(\x_t^*)\right).
\end{equation}
The last inequality holds due to the following facts. From the setting of the inner iteration $K = \lceil\frac{4(L + \lambda)}{\lambda}\ln 4\rceil$, on one hand, it is clear that $\gamma^K \leq \frac{1}{4}$ holds when $\gamma = \frac{1}{2}$. On the other hand, when $\gamma = 1 - \frac{\lambda}{4(L-\lambda)}$, we have $\left(1 - \frac{\lambda}{4(L-\lambda)}\right)^K \leq \exp\left(-\frac{\lambda K}{4(L-\lambda)}\right) \leq \frac{1}{4}.$

Therefore, we can upper bound the dynamic regret as follows.
\begin{equation}
\label{eq:variation}
\begin{split}
	\sum_{t=1}^{T} f_t(\x_t) - f_t(\x_t^*) \leq {} & f_1(\x_1) - f_1(\x_1^*) + \sum_{t=2}^{T} f_t(\x_t) - f_{t-1}(\x_t) + f_{t-1}(\x_t) - f_t(\x_t^*) \\
	\leq {} & f_1(\x_1) - f_1(\x_1^*) + \V_T + \sum_{t=2}^{T} f_{t-1}(\x_t) - f_t(\x_t^*)\\
	= {} & f_1(\x_1) - f_T(\x_{T+1}) + \V_T + \sum_{t=1}^{T-1} f_{t}(\x_{t+1}) - f_t(\x_t^*) \\
	\overset{\eqref{eq:func-contract-apply}}{\leq} {} & f_1(\x_1) - f_T(\x_{T+1}) + \V_T + \frac{1}{4}\sum_{t=1}^{T-1} f_t(\x_t)-f_t(\x_t^*). 
\end{split}
\end{equation}

Thus, by rearranging above terms, we prove the statement in Theorem~\ref{thm:VT-bound}.
\end{proof}

\subsection{Comparisons of Path-length and Function Variation} 
\label{sec:comparison}
As demonstrated by~\citet{AISTATS'15:dynamic-optimistic}, the path-length and function variation are not comparable in general. Let us consider the following two instances.
\begin{myInstance}[{Online linear optimization over a $d$-dimensional simplex}] Consider the online linear optimization problem, the feasible set $\X$ is set as $\Delta_d = \{\x \mid \x \in \R^d, x_i \geq 0, \sum_{i=1}^d x_i=1 \}$ and the online functions are $f_t(\x) = \inner{\w_t}{\x}$, where 
\begin{equation*}
 	\w_t =
 	\begin{cases}
 	[\frac{1}{T},0,0,\ldots,0]^{\T} & \text{ when $t$ is odd}\\
 	[0,\frac{1}{T},0,\ldots,0]^{\T} & \text{ when $t$ is even}
 	\end{cases}
\end{equation*} 	
	Then, $\x_t^* = [1,0,0,\ldots,0]^{\T}$ when $t$ is odd and $\x_t^* = [0,1,0,\ldots,0]^{\T}$ when $t$ is even. So we have 
	\begin{equation*}
		\P_T = \S_T = \Theta(T),\quad \V_T = \Theta(1).
	\end{equation*}
\end{myInstance} 

\begin{myInstance}[{Online linear optimization over a $2$-dimensional simplex, or prediction with two-expert advice}] Let the feasible set $\X$ be as $\Delta_2 = \{\x \mid \x \in \R^2, x_i \geq 0, x_1 + x_2 = 1 \}$, and the online functions be $f_t(\x) = \inner{\w_t}{\x}$, where 
\begin{equation*}
 	\w_t =
 	\begin{cases}
 	[-\frac{1}{2},0]^{\T} & \text{ when $t$ is odd}\\
 	[0,\frac{1}{2}]^{\T} & \text{ when $t$ is even}
 	\end{cases}
\end{equation*} 
Then, we know that the optimal decision is fixed as the first expert, that is, $\x_t^* = [1,0]^{\T}$. Thus,
\begin{equation*}
	\P_T = \S_T = 0,\quad \V_T = \Theta(T).
\end{equation*}
\end{myInstance} 

From above two instances, we conclude that function variation and (squared) path-length are not comparable in general. Our analysis shows that OMGD actually enjoys an $\O(\min\{\P_T,\S_T,\V_T\})$ dynamic regret guarantee, which achieves \emph{the best of three worlds} and thus strictly improves the previous result of $\O(\min\{\P_T,\S_T\})$ by~\citet{NIPS'17:zhang-dynamic-sc-smooth}.

\section{Discussion}
\label{sec:discussion}
In this section, we discuss some aspects of our results.

\subsection{Relationship between Squared Path-length and Function Variation}
We further discuss the relationship between $\S_T$ and $\V_T$ providing that the online functions are $\lambda$-strongly convex. Denote by $\x_t^* = \argmin_{\x\in\X} f_t(\x)$ and $\x_{t-1}^* = \argmin_{\x\in\X} f_{t-1}(\x)$. Indeed,
\begin{align*}
	\norm{\x_t^* - \x_{t-1}^*}_2^2 \leq {} & \frac{1}{\lambda} \Big(f_{t-1}(\x_t^*) - f_{t-1}(\x_{t-1}^*)\Big)\\
	= {} & \frac{1}{\lambda} \Big(f_{t-1}(\x_t^*) - f_t(\x_t^*) + f_t(\x_t^*) - f_{t-1}(\x_{t-1}^*)\Big)\\
	\leq {} & \frac{1}{\lambda} \Big(f_{t-1}(\x_t^*) - f_t(\x_t^*) + f_t(\x_{t-1}^*) - f_{t-1}(\x_{t-1}^*)\Big)\\
	\leq {} & \frac{2}{\lambda} \sup_{\x \in \X} \abs{f_t(\x) - f_{t-1}(\x)}.
\end{align*}
Summing over all iterations yields $\S_T \leq 2\V_T/\lambda$. However, the right-hand side exhibits an explicit dependency on the strong convexity modulus $\lambda$, which could be very large when $\lambda$ is small. Our analysis shows that the undesirable dependency can be eliminated by multiple gradient descent per round, such that the dynamic regret can be upper bounded by $\V_T + 2(f_{1}(\x_1)-f_T(\x_{T+1}))$ without the $1/\lambda$ factor, as demonstrated in Theorem~\ref{thm:VT-bound}.

\subsection{Extensions to Non-strongly Convex Functions}
When the online functions are not strongly convex, we discover that the following \emph{greedy strategy} also enjoys nice dynamic regret guarantees. The greedy strategy picks one of the minimizers of the last online function as the current decision, namely,
\begin{equation}
	\label{eq:greedy-benchmark}
	\x_{t+1} = \argmin_{\x \in \X} f_t(\x).
\end{equation}
Since the online function $f_t$ is not guaranteed to be strongly convex, it may have multiple minimizers. Denote by $\X_t^*$ the set of all its minimizers, then the greedy strategy can choose an arbitrary one from $\X_t^*$ as the current decision. In essence, this greedy strategy can be regarded as a version of OMGD with a sufficiently large number of inner iterations.

The following theorem demonstrates the greedy strategy enjoys an $\O(\min\{\bar{\P}_T, \bar{\S}_T, \V_T\})$ dynamic regret. Note that the definitions of (squared) path-length terms are slightly different from previous ones to handle the potential non-uniqueness of minimizers. The proof is in Appendix~\ref{appendix:proof}.
\begin{myThm}
\label{thm:greedy-benchmark}
Under Assumption~\ref{assm:function} except for the strong convexity condition (i.e., it is allowed that $\lambda = 0$), suppose $\sum_{t=1}^{T}\norm{\nabla f_t(\x_t^*)}_2^2 = \O(\bar{\S}_T)$, then the greedy strategy~\eqref{eq:greedy-benchmark} satisfies
\begin{equation*}
	\sum_{t=1}^{T} f_t(\x_t) - \sum_{t=1}^{T} f_t(\x_t^*) \leq \O\big(\min\{\bar{\P}_T, \bar{\S}_T, \V_T\}\big),
\end{equation*}
where $\bar{\P}_T = \max_{\{\x_t^* \in \X_t^*\}_{t=1}^T}\sum_{t=2}^{T} \norm{\x_{t-1}^* - \x_{t}^*}_2$ is the path-length, $\bar{\S}_T =\max_{\{\x_t^* \in \X_t^*\}_{t=1}^T} \sum_{t=2}^{T} \norm{\x_{t-1}^* - \x_{t}^*}_2^2$ is the squared path-length, and $\V_T = \sum_{t=2}^{T} \sup_{\x \in \X} \abs{f_{t-1}(\x) - f_t(\x)}$ is the function variation. 
\end{myThm}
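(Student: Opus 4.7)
The bound splits naturally into three essentially independent derivations, one for each regularity measure. The key structural observation common to all is that, because the greedy rule produces $\x_{t+1}$ as an \emph{exact} minimizer of $f_t$, any two elements of $\X_t^*$ give the same value $f_t(\x_t^*)$. Consequently the realised dynamic regret is invariant under the choice of comparator sequence $\{\x_t^*\}_{t=1}^T$ with $\x_t^* \in \X_t^*$, so one is free to pick whichever sequence simplifies the argument and then upper bound its path-length/squared-path-length by the worst-case quantities $\bar{\P}_T$, $\bar{\S}_T$.

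For the $\V_T$ bound I would re-use the decomposition from~\eqref{eq:variation}:
\begin{equation*}
\sum_{t=1}^T \big[f_t(\x_t) - f_t(\x_t^*)\big] = \big[f_1(\x_1) - f_1(\x_1^*)\big] + \sum_{t=2}^T \big[f_t(\x_t) - f_{t-1}(\x_t)\big] + \sum_{t=2}^T \big[f_{t-1}(\x_t) - f_t(\x_t^*)\big].
\end{equation*}
The second right-hand sum is bounded by $\V_T$ by definition. For the third sum, the greedy rule gives $\x_t \in \X_{t-1}^*$, hence $f_{t-1}(\x_t) = f_{t-1}(\x_{t-1}^*)$, so it telescopes to $f_1(\x_1^*) - f_T(\x_T^*)$. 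The contraction coefficient $\tfrac14$ appearing in~\eqref{eq:variation} is therefore replaced by $0$, and we obtain $\sum_{t=1}^T [f_t(\x_t) - f_t(\x_t^*)] \leq \V_T + f_1(\x_1) - f_T(\x_T^*)$, i.e.\ $\V_T$ up to a boundary constant.

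For the $\bar{\P}_T$ bound, convexity alone (the strong-convexity modulus $\lambda$ is allowed to vanish) suffices:
\begin{equation*}
f_t(\x_t) - f_t(\x_t^*) \leq \inner{\nabla f_t(\x_t)}{\x_t - \x_t^*} \leq G \norm{\x_t - \x_t^*}_2.
\end{equation*}
I would then pick the comparator sequence so that $\x_t^* = \x_{t+1}$ for $t = 1,\ldots,T-1$ (admissible since $\x_{t+1} \in \X_t^*$ by the greedy rule), with $\x_T^*$ any element of $\X_T^*$. Under this choice $\norm{\x_t - \x_t^*}_2 = \norm{\x_{t-1}^* - \x_t^*}_2$ for each $t \geq 2$, so the sum $\sum_t \norm{\x_t - \x_t^*}_2$ equals the path-length of this specific comparator sequence up to a boundary term at $t=1$, and is therefore at most $\bar{\P}_T$. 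For the $\bar{\S}_T$ bound I would instead start from $L$-smoothness at $\x_t^*$, namely $f_t(\x_t) - f_t(\x_t^*) \leq \inner{\nabla f_t(\x_t^*)}{\x_t - \x_t^*} + \tfrac{L}{2}\norm{\x_t - \x_t^*}_2^2$, split the cross term by Young's inequality with a constant $\alpha > 0$, and use the assumption $\sum_t \norm{\nabla f_t(\x_t^*)}_2^2 = \O(\bar{\S}_T)$ to absorb the residual linear piece; the quadratic part is bounded by $\tfrac{L+\alpha}{2}\bar{\S}_T$ via the same comparator choice.

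The only real subtlety — more a bookkeeping issue than a technical obstacle — is justifying the comparator substitution when $\X_t^*$ is non-singleton (as can happen once $\lambda = 0$): one must explicitly invoke that $f_t$ is constant on $\X_t^*$ to argue that swapping a generic $\x_t^*$ for our preferred choice $\x_{t+1}$ does not alter the regret value, and then appeal to the max-over-sequences definitions of $\bar{\P}_T$ and $\bar{\S}_T$ to upper bound the path-length of this tailored sequence. Once that invariance is set up cleanly, the three bounds assemble into $\O(\min\{\bar{\P}_T,\bar{\S}_T,\V_T\})$ in the same form as Theorem~\ref{thm:VT-bound}.
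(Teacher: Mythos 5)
Your proposal is correct and matches the paper's own proof in all essentials: the same three separate bounds, the same telescoping via $f_{t-1}(\x_t)=f_{t-1}(\x_{t-1}^*)$ for the $\V_T$ part, and the same smoothness-plus-Young's-inequality argument for $\bar{\S}_T$ (your use of convexity at $\x_t$ for the $\bar{\P}_T$ bound is equivalent to the paper's Lipschitz step, since $\x_t=\x_{t-1}^*$). The comparator-invariance bookkeeping you flag is handled implicitly in the paper by the max-over-sequences definitions of $\bar{\P}_T$ and $\bar{\S}_T$, exactly as you suggest.
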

Note that the smoothness condition is only used in deriving the square path-length bound, which is not necessary for path-length bound and function variation bound.

\begin{myRemark}
We further elucidate the connection between the greedy strategy and OMGD. Both methods decide the current decision $\x_{t+1}$ based on the available online function $f_t$. The greedy strategy selects $\x_{t+1} = \x_t^* \in \argmin_{\x \in \X} f_t(\x)$ as the minimizer directly, while OMGD is a first-order method performing multiple gradient descent $\z_t^{k+1} = \Pi_{\X}[\z_t^k - \eta\nabla f_t(\z_t^k)]$ for $k=1,\ldots,K$ with $\z_t^1 = \x_t$ and $\x_{t+1}= \z_t^{K+1}$. Thus, OMGD can be regarded as a high-quality approximation of the greedy strategy via a \emph{finite} number of inner gradient descent, providing that online functions are strongly convex. We note that the initialization of $\z_t^1 = \x_t$ is necessary for achieving a \emph{constant} number of inner iterations (in our setting, $K = \lceil 4(L/\lambda + 1)\ln 4\rceil$ as shown in Theorem~\ref{thm:VT-bound}).
\end{myRemark}

\begin{myRemark}
It is worth noting that in the analysis of OMGD the strong convexity is only necessitated by the function-value decay lemma (Lemma~\ref{lemma:func-contract}) in our analysis. Thus, it would be possible to relax the strong convexity condition by some other properties that allow a similar function-value decay property exhibited in~\eqref{eq:func-contract}, with the purpose of accommodating broader class of problems.
\end{myRemark}

\section{Conclusion}
\label{sec:conclusion}
In this paper, we investigate an existing online algorithm (Online Multiple Gradient Descent, OMGD) proposed by~\citet{NIPS'17:zhang-dynamic-sc-smooth} for dynamic regret minimization of strongly convex and smooth functions. Under certain mild assumptions, OMGD was shown to attain $\O(\min\{\P_T, \mathcal{S}_T\})$~dynamic regret, where $\P_T$ and $\S_T$ are path-length and squared path-length, respectively. This paper contributes to an improved analysis and proves an $\O(\min\{\P_T, \S_T, \V_T\})$ dynamic regret without modifying the algorithm, where $\V_T$ is the function variation. The key technique used to realize the improvement is a careful usage of function-value decay lemma. Since different regularities $\P_T, \S_T, \V_T$ are generally not comparable and are favored in different scenarios, our presented dynamic regret achieves a best-of-three-worlds guarantee and is strictly tighter than previous results.

Many recent works consider control and dynamical systems in non-stationary environments~\citep{AISTATS'19:Goel,NIPS'19:Li,NIPS'20:Guanya,arXiv'21:Scream}. As a matter of fact, online learning plays an important role in modern computational control theory, especially the online non-stochastic control problem introduced by~\citet{ICML'19:online-control}. We believe that the dynamic regret minimization results developed in this paper could be of interest for further development of online control in non-stationary environments.

\section*{Acknowledgment}
This work was partially supported by the Open Research Projects of Zhejiang Lab (NO.~2021KB0AB02), and the Collaborative Innovation Center of Novel Software Technology and Industrialization. We are grateful for the anonymous reviewers for their valuable comments.

\appendix
\section{{Proof of Theorem~\ref{thm:greedy-benchmark}}}
\label{appendix:proof}
\begin{proof}[{of Theorem~\ref{thm:greedy-benchmark}}]
We first consider the path-length bound. Note that the path-length dynamic regret for this greedy strategy was firstly proved by~\citet{ICML'16:Yang-smooth}. Below we restate their proof.
\begin{align*}
\sum_{t=1}^T f_t(\x_t) - \sum_{t=1}^T f_t(\x_t^*) = {} & f_1(\x_1) - f_1(\x_1^*) + \sum_{t=2}^T f_t(\x_{t-1}^*) - \sum_{t=1}^T f_t(\x_t^*)\\
\leq {} & f_1(\x_1) - f_1(\x_1^*) + G \sum_{t=2}^T \norm{\x_{t-1}^* - \x_t^*}_2\\
\leq {} & f_1(\x_1) - f_1(\x_1^*) + G \bar{\P}_T = \O(\bar{\P}_T).
\end{align*}
Next, we prove the squared path-length bound. 
\begin{align*}
\sum_{t=1}^T f_t(\x_t) - \sum_{t=1}^T f_t(\x_t^*) = {} & f_1(\x_1) - f_1(\x_1^*) + \sum_{t=2}^T f_t(\x_{t-1}^*) - \sum_{t=1}^T f_t(\x_t^*) \\
\leq {} & f_1(\x_1) - f_1(\x_1^*) + \sum_{t=2}^T \left( \nabla f_t(\x_t^*)^\T(\x_{t-1}^* - \x_t^*) + \frac{L}{2} \norm{\x_{t-1}^* - \x_t^*}_2^2 \right)\\
\leq {} & f_1(\x_1) - f_1(\x_1^*) + \frac{1}{2} \sum_{t=2}^{T} \norm{\nabla f_t(\x_t^*)}_2^2 + \frac{L+1}{2} \sum_{t=2}^{T} \norm{\x_{t-1}^* - \x_t^*}_2^2\\
\leq {} & \O(\bar{\S}_T),
\end{align*}
where the first inequality exploits $L$-smoothness condition and the last inequality makes use of the assumption that $\sum_{t=1}^{T}\norm{\nabla f_t(\x_t^*)}_2^2 = \O(\bar{\S}_T)$.

We finally prove the function variation bound. Note that the function variation dynamic regret for this greedy strategy was firstly mentioned by~\citet{AISTATS'15:dynamic-optimistic} in the paragraph under Eq.~(5) of their paper. Following the derivation of~\eqref{eq:variation}, we have
\begin{align*}
\sum_{t=1}^T f_t(\x_t) - \sum_{t=1}^T f_t(\x_t^*) = {} & f_1(\x_1) - f_1(\x_1^*) + \sum_{t=2}^T f_t(\x_{t-1}^*) - \sum_{t=1}^T f_t(\x_t^*)\\
\leq & f_1(\x_1) - f_T(\x_T^*) + \sum_{t=2}^T \left(f_t(\x_{t-1}^*) - f_{t-1}(\x_{t-1}^*)\right)\\
\leq {} & f_1(\x_1) - f_1(\x_1^*) + \sum_{t=2}^T \sup_{\x \in \X} \abs{f_t(\x) - f_{t-1}(\x)} = \O(\V_T).
\end{align*}

Combining above three upper bounds of dynamic regret ends the proof.
\end{proof}
\newpage
\bibliography{online_learning}
\bibliographystyle{abbrvnat}
\end{document}